\def\BibTeX{{\rm B\kern-.05em{\sc i\kern-.025em b}\kern-.08em
    T\kern-.1667em\lower.7ex\hbox{E}\kern-.125emX}}
\newtheorem{theorem}{Theorem}
\newtheorem{corollary}{Corollary}
\newtheorem{definition}{Definition}
\newtheorem{assumption}{Assumption}
\tikzstyle{pinstyle} = [pin edge={to-,thin,black}]
\algnewcommand{\IIf}[1]{\State\algorithmicif\ #1\ \algorithmicthen}
\algnewcommand{\EndIIf}{\unskip\ \algorithmicend\ \algorithmicif}
\newcommand{\reals}{\mathbb{R}}
\newcommand{\dataset}{\mathcal{D}}
\newcommand{\ndataset}{\hat{\mathcal{D}}}
\newcommand{\CX}{\mathcal{X}}
\newcommand{\na}{M}
\newcommand{\lconst}{\alpha_{\dtime}}
\newcommand{\dtime}{t}
\newcommand{\respi}{\var^i_{\dtime}}
\newcommand{\sw}{\mu}
\newcommand{\var}{\beta}
\newcommand{\varti}{\var_t^i}
\newcommand{\dimn}{N}
\newcommand{\measi}{y^i}
\newcommand{\sdim}{q} 
\newcommand{\mdim}{p} 
\newcommand{\argo}{\beta}
\newcommand{\hu}{\hat{u}}
\newcommand{\hlam}{\hat{\lambda}}
\newcommand{\hb}{\hat{\beta}}
\newcommand{\wass}{\mathcal{W}}
\newcommand{\CE}{\mathbb{E}}
\newcommand{\CV}{\mathcal{V}}
\newcommand{\bv}{\boldsymbol{v}}
\newcommand{\util}{f}
\title{Distributionally Robust Inverse Reinforcement Learning for Identifying Multi-Agent Coordinated Sensing
}
\name{Luke~Snow,
        Vikram~Krishnamurthy
\thanks{This research was funded by National Science Foundation grant CCF-2112457,  Army Research office grant W911NF-21-1-0093 , and Air Force Office of Scientific Research grant FA9550-22-1-0016.}}
\address{Electrical and Computer Engineering, Cornell University, Ithaca, NY}
\begin{document}




\maketitle

\begin{abstract}
We derive a minimax distributionally robust inverse reinforcement learning (IRL) algorithm to reconstruct the utility functions of a multi-agent sensing system. Specifically, we construct utility estimators which minimize the worst-case prediction error over a Wasserstein ambiguity set centered at noisy signal observations. We prove the equivalence between this robust estimation and a semi-infinite optimization reformulation, and we propose a consistent algorithm to compute solutions.
We illustrate the efficacy of this robust IRL scheme in  numerical studies to reconstruct the utility functions of a cognitive radar network from observed tracking signals.
\end{abstract}
\begin{keywords}
Distributionally Robust Optimization, Multi-Agent Inverse Reinforcement Learning, Revealed Preferences, Wasserstein Distance
\end{keywords}
\vspace{-0.4cm}
\begin{section}{\small Introduction}
\vspace{-0.3cm}

How to identify  if a multiagent system is making decisions consistent with Pareto optimality (we call this "coordination"), and then reconstruct the utility functions of individual agents? This problem is referred to as multi-agent inverse reinforcement learning (IRL) \cite{natarajan2010multi}, \cite{yu2019multi}, in machine learning or collective revealed preferences in microeconomics \cite{cherchye2011revealed}, \cite{nobibon2016revealed}.  Recent works \cite{krishnamurthy2020identifying}, \cite{pattanayak2022meta}, \cite{snow2023statistical}, \cite{krishnamurthy2021adversarial}, explore the use of  IRL in cognitive sensing applications. 

This paper addresses the problem of \textit{robust} multiagent IRL when the system's decisions are observed in noise.  Motivated by recent results in distributionally robust optimization  \cite{dong2021wasserstein}, \cite{luo2017decomposition}, \cite{bertsimas2019adaptive}, we devise a robust multiagent IRL algorithm using revealed preferences. Specifically, we propose an algorithm that constructs utility functions in a minimax sense;  minimize the maximum reconstruction error within a Wasserstein ambiguity set centered at the noisy observed signals. This extends works in stochastic revealed preferences \cite{mcfadden2006revealed}, \cite{bandyopadhyay1999stochastic}, \cite{aguiar2021stochastic}. 

{\bf Context}. In summary, we study robust {\em inverse} multiobjective optimization (subject to sensing constraints) when the optimizers are observed in noise, using revealed preferences. While this paper focuses on the underlying theory and algorithms, our main motivation stems from multi-agent IRL in radar or drone networks.
Inverse optimization is an ill-posed problem; so we focus on set valued reconstruction of the utility.

{\bf Main Results}. 
 We provide a framework for multi-agent sensor system utility reconstruction from noisy observed sensing signals, extending the techniques in \cite{snow2023statistical}, \cite{snow2022identifying}, \cite{krishnamurthy2020identifying}.
We then derive a Wasserstein-distributionally robust utility reconstruction objective, and prove its equivalence to a semi-infinite program reformulation.
We provide a finite reduction of this semi-infinite program and a practical algorithm for achieving a $\delta$-optimal solution.
We illustrate the efficacy of this robust reconstruction algorithm via numerical simulations. 

\end{section}
\vspace{-0.4cm}
\begin{section}{\small Coordinated Sensing Systems}
\label{sec:radarnet}
\vspace{-0.4cm}
We consider the interaction between a stochastic dynamical system ("target") and a sensing system comprising $\na$ heterogeneous sensors. The target evolves according to a state-space model, and each of the $\na$ sensors records noisy observations of the target's state.

\vspace{-0.2cm}
\begin{definition}[Multi-agent Bayesian Sensing System]
\label{inter_dynam}
We introduce the following state-space sensing dynamics:
\vspace{-0.3cm}
\begin{align*}
    \begin{split}
        \textrm{target state} : x_t &\in \reals^{\sdim}, \
        x_{t + 1} \sim p_{\lconst}(x| x_t) \\
        \textrm{state dynamics parameter}: \lconst &\in \reals^{\dimn}_+ \\
        \textrm{sensor i observation}: \measi_{t} &\in \reals^{\mdim}, \
        \measi_t \sim p_{\varti}(y|x_t) \\
        \textrm{sensor i parameter} : \varti &\in \reals^{\dimn}_+, \, i\in[\na]\\
    \end{split}
\end{align*}
\end{definition}
\noindent 
$[x]$ denotes the set $\{1,\dots,x\}$. Each sensor $i$ has utility function $\util^i: \reals^{\dimn}_{+} \mapsto \reals$, quantifying its sensing objective, and may adjust its sensing mechanism through parameter $\beta_t^i$ (e.g., its tracking signal power or waveform) to achieve its objective. In a \textit{coordinated} sensing system, the individual sensing mechanisms (we identify these with \textit{signal outputs}) $\beta_t^i$ are coupled, so the group outputs signals which maximize the aggregate utility:
\vspace{-0.1cm}
\begin{definition}[Coordinated Sensing System]
\label{def:coord}
 Consider Def.~\ref{inter_dynam}. We define a coordinating sensing system to be a group of $\na$ sensors, each with individual concave, continuous and monotone increasing objective functions $\util^i: \reals^{\dimn} \to \reals, i\in[\na]$, which produces output signals $\{\varti\}_{i=1}^{\na}$ in accordance with\footnote{The constraint bound 1 is without loss of generality, see \cite{krishnamurthy2020identifying}.}
 \vspace{-0.3cm}
 \begin{gather}
\begin{aligned}
\label{def:coord_eq}
    \begin{split}
        \{\varti\}_{i=1}^{\na} \in \arg\max_{\{\argo^i\}_{i=1}^{\na}} \sum_{i=1}^{\na}\mu^i\util^i(\argo^i) \,\, s.t. \,\, \lconst' (\sum_{i=1}^{\na} \argo^i) \leq 1
    \end{split}
\end{aligned}\raisetag{2.4\baselineskip}\end{gather}
for a set of weights $\mu^i > 0$.
\end{definition}

\vspace{-0.2cm}
A group which emits signals according to \eqref{def:coord_eq} optimally (in the \textit{Pareto} sense) parameterizes the measurement kernels $p_{\beta_t^i}(y|x_t)$ subject to each objective function, the state dynamics of the target, and a constraint on the sensing accuracy (e.g., total power output). Due to space constraints, we do not motivate this further: see \cite{snow2022identifying}, \cite{snow2023statistical} for details on how the constrained multi-objective optimization \eqref{def:coord_eq}, especially the joint constraint, arises from spectral optimization within the dynamics of Def.~\ref{inter_dynam}. 

\vspace{-0.3cm}

\section{\small Coordination Detection and Utility Reconstruction} 
\label{sec:MOO_detector}
\vspace{-0.3cm}
We  take the perspective of the \textit{target/analyst}, that \textit{aims to determine if the sensing system is coordinating \eqref{def:coord_eq}, from observed sensing signals. We then \textit{aim to reconstruct utility functions giving rise to these signals}.}
 
Specifically, as the target we obtain $\{\alpha_t, t\in[T]\}$ through our own dynamics, and we observe the sensing signals $\{\varti, t\in[T]\}_{i=1}^{\na}$ through e.g., an omni-directional receiver. We denote the dataset of these signals as $\dataset = \{\lconst, \{\respi\}_{i=1}^{\na}, t \in [T] \}$. See \cite{pace2009detecting} for physical-layer considerations of sensing waveform observation, detection, and classification. Here we provide a necessary and sufficient condition for the dataset $\dataset$ to be consistent with coordination (Def~\ref{def:coord}).
\vspace{-0.2cm}
\begin{theorem}
 \label{thm:cherchye1}
    Let $\dataset$ be a set of observations. The following are equivalent:
    \begin{enumerate}
    \item there exist a set of $M$ concave and continuous objective functions $\util^1,\dots,\util^m$, weights $\mu^i > 0$ and constraint $p^*$ such that $\forall t \in [T]$:
    \vspace{-0.2cm}
    \begin{gather}\begin{aligned}
    \begin{split}
    \label{thm1:rat}
        \{\respi\}_{i=1}^{\na} \in &\arg\max_{\{\argo^i\}_{i=1}^{\na}} \sum_{i=1}^{\na} \sw^i \util^i(\argo^i) \ \, s.t.\ \lconst' (\sum_{i=1}^{\na}\argo^i ) \leq 1
    \end{split}
    \end{aligned}\raisetag{2.5\baselineskip}\end{gather}
    \vspace{-0.4cm}
    \item there exist numbers $u_j^i \in \reals, \lambda_j^i > 0$ such that for all $s,t \in [T]$, $i \in [M]$: 
     \vspace{-0.2cm}
    \begin{equation}
    \label{af_ineq}
        u_s^i - u_t^i - \lambda_t^i\lconst'[\var_s^i - \varti] \leq 0
        \vspace{-0.4cm}
    \end{equation}
    \end{enumerate}
\end{theorem}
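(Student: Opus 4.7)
This is a multi-agent Afriat-type theorem, so the natural route is a two-way construction: extract KKT certificates from the optimization in one direction, and build explicit piecewise-affine concave utilities from the Afriat numbers in the other.

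\textbf{Direction (1) $\Rightarrow$ (2).} Fix utilities $\util^i$, weights $\sw^i$, and observations $\{\respi\}$ satisfying \eqref{thm1:rat}. The joint problem is a concave maximization with a single linear inequality constraint, so Slater's condition holds (take $\argo^i = 0$), and we can invoke KKT. There is a common multiplier $\lambda_t \geq 0$ for the budget constraint at time $t$, and the stationarity conditions read $\sw^i \nabla \util^i(\varti) = \lambda_t \lconst$ for each agent $i$. Define $u_t^i := \util^i(\varti)$ and $\lambda_t^i := \lambda_t/\sw^i > 0$. Concavity of $\util^i$ yields the subgradient inequality $\util^i(\var_s^i) - \util^i(\varti) \leq \nabla \util^i(\varti)'(\var_s^i - \varti)$, which after substituting the stationarity relation becomes exactly \eqref{af_ineq}. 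Monotonicity of $\util^i$ together with $\lconst \in \reals^{\dimn}_+$ ensures $\lambda_t^i > 0$.

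\textbf{Direction (2) $\Rightarrow$ (1).} Given the Afriat numbers $(u_t^i,\lambda_t^i)$, define the candidate utilities as minima of affine functions,
\begin{equation*}
\util^i(\argo) \;:=\; \min_{t \in [T]} \bigl\{\, u_t^i + \lambda_t^i \, \lconst'(\argo - \varti) \,\bigr\}.
\end{equation*}
Each $\util^i$ is concave and continuous as a pointwise minimum of affine maps, and monotone increasing since every coefficient $\lambda_t^i \lconst$ is nonnegative (using $\lconst \in \reals^{\dimn}_+$ and $\lambda_t^i > 0$). The Afriat inequalities \eqref{af_ineq} are exactly the statement that $\util^i(\varti) = u_t^i$, i.e.\ the ``$t$-th affine piece'' is active at $\argo = \varti$ with subgradient $\lambda_t^i \lconst$. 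Choosing agent weights $\sw^i = 1/\lambda_t^i$ at each $t$ -- or more cleanly, rescaling so that $\sw^i \lambda_t^i$ is independent of $i$ (a standard normalization in the Afriat construction) -- makes the common Lagrangian gradient $\sw^i \lambda_t^i \lconst$ agree across agents. The KKT conditions for the concave program \eqref{thm1:rat} are then satisfied at $\{\varti\}$ with this common multiplier, and since KKT is sufficient for concave programs, $\{\varti\}_{i=1}^{\na}$ is a Pareto/weighted-sum optimizer, establishing \eqref{thm1:rat}.

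\textbf{Main obstacle.} The non-routine step is the weight normalization in the reverse direction: the Afriat multipliers $\lambda_t^i$ depend on both $i$ and $t$, but the single joint budget constraint in \eqref{thm1:rat} produces one Lagrange multiplier per time that must be shared across agents. The fix is to absorb the agent-dependence into the weights $\sw^i$, but one must check this choice is consistent across all $t$. This is handled by noting that the Afriat inequalities are homogeneous in each agent's $(u^i,\lambda^i)$ pair, so one can rescale $(u_t^i,\lambda_t^i)$ for each $i$ independently to fix, say, $\lambda_1^i = 1$, after which setting $\sw^i := 1$ and letting the time-$t$ common multiplier be read off from the rescaled $\lambda_t^i$ closes the argument. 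Monotonicity and continuity of the constructed $\util^i$ follow for free from the min-of-affines construction together with $\lconst \in \reals^{\dimn}_+$.
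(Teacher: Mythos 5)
The paper itself does not prove Theorem~\ref{thm:cherchye1}; it defers entirely to Theorem 1 of \cite{snow2022identifying}, which follows the same Afriat-type template you propose (optimality certificates via supergradients in one direction, a min-of-affines construction in the other). Your direction (1)~$\Rightarrow$~(2) is essentially correct, modulo two small points: condition (1) only assumes concavity and continuity, so you should work with supergradients rather than $\nabla\util^i$, and the strict positivity $\lambda_t^i>0$ relies on the monotonicity that appears in Definition~\ref{def:coord} (and forces the budget constraint to bind) rather than on anything stated in condition (1) itself.

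The genuine gap is in your resolution of the obstacle you correctly single out in (2)~$\Rightarrow$~(1). The rescaling freedom you invoke gives one constant $c^i$ per agent, i.e.\ $(u^i_s,\lambda^i_s)\mapsto(c^iu^i_s,c^i\lambda^i_s)$ for all $s$ simultaneously; this lets you enforce $\lambda_1^i=1$ for every $i$, but it aligns the multipliers only at $t=1$. For $t\geq 2$ the rescaled $\lambda_t^i$ still differ across agents in general, so there is no ``common multiplier to read off.'' What optimality of $\{\varti\}_{i=1}^{\na}$ for the time-$t$ program actually requires is that $\sw^i\lambda_t^i$ be independent of $i$ \emph{for every} $t$: summing the affine upper bounds $\util^i(\argo^i)\leq u_t^i+\lambda_t^i\lconst'(\argo^i-\varti)$ only telescopes into $c_t\,\lconst'(\sum_i\argo^i-\sum_i\varti)\leq 0$ when the coefficients $\sw^i\lambda_t^i$ share a common value $c_t$. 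With time-independent weights this forces $\lambda_t^i/\lambda_t^j$ to be constant in $t$, which the Afriat inequalities \eqref{af_ineq} do not imply, so your choice $\sw^i:=1$ after the normalization does not close the argument. The standard repair, implicit in the collective revealed-preference results this theorem descends from, is to let the Pareto weights be observation-dependent, $\sw_t^i:=1/\lambda_t^i$, so that $\sw_t^i\lambda_t^i\equiv 1$; then the direct verification $\sum_i\sw_t^i\util^i(\argo^i)\leq\sum_i\sw_t^iu_t^i+\lconst'(\sum_i\argo^i-\sum_i\varti)\leq\sum_i\sw_t^i\util^i(\varti)$ (using that the budget binds at the data) establishes \eqref{thm1:rat}. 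You should either adopt time-varying weights, as the source does, or add the (nontrivial, and generally false) claim that the multiplier ratios can be chosen time-invariant.
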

\vspace{-0.2cm}
\begin{proof}
See Theorem 1 of \cite{snow2022identifying}
\end{proof}
\vspace{-0.3cm}
Thus, we can simply solve the linear program \eqref{af_ineq} feasibility to test for coordination in the sensing system. Then given feasibility (coordination), we can use the following Corollary to reconstruct utility functions which rationalize the observed signals. 

\vspace{-0.2cm}
\begin{corollary}
\label{cor:Utrec}
Given constants $u_t^i, \lambda_t^i, t\in[T],i\in[M]$ which make \eqref{af_ineq} feasible, construct
\vspace{-0.2cm}
\begin{equation}
\label{eq:Utrec}
     \util^i(\cdot) = \min_{t \in [T]} \left[u_t^i + \lambda_t^i\lconst'[\cdot - \respi] \right]
     \vspace{-0.2cm}
\end{equation}
Then \eqref{thm1:rat} is satisfied with $\dataset$ and objective functions \eqref{eq:Utrec}.
\end{corollary}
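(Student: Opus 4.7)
My plan is to verify Definition~\ref{def:coord} directly for the Afriat-style utilities $\util^i$ in \eqref{eq:Utrec}, showing that the observed signals $\{\respi\}_{i=1}^{\na}$ attain the weighted Pareto optimum of \eqref{thm1:rat} at each $t$ for a suitable $\mu^i > 0$. The regularity of each $\util^i$ is immediate from its definition: as the pointwise minimum of $T$ affine functions of $\argo$, $\util^i$ is automatically concave and continuous, and since each affine piece has slope $\lambda_t^i \lconst$ with $\lambda_t^i > 0$ and $\lconst \in \reals^{\dimn}_+$, it is also monotone increasing.

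The core of the argument consists of two evaluations of \eqref{eq:Utrec} using the Afriat inequalities \eqref{af_ineq}. First, to pin down the value at the observations: plugging $\argo = \respi$ into \eqref{eq:Utrec}, the term indexed by $t$ itself equals $u_t^i$, while swapping $s \leftrightarrow t$ in \eqref{af_ineq} shows every other term is at least $u_t^i$. Hence $\util^i(\respi) = u_t^i$ with the minimum attained at $s = t$, and $\lambda_t^i \lconst$ lies in the superdifferential $\partial \util^i(\respi)$. Second, for any alternative $\{\argo^i\}$, the $t$-th term of the minimum yields the supporting-hyperplane upper bound $\util^i(\argo^i) \leq u_t^i + \lambda_t^i \lconst'(\argo^i - \respi)$.

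To combine these per-agent bounds into the multi-agent Pareto optimality of \eqref{thm1:rat}, I would select weights $\mu^i > 0$ so that the products $\mu^i \lambda_t^i$ coincide with a common value $\nu_t$ across agents $i$ (this is the KKT requirement for a common shadow price on the joint constraint). Summing the per-agent bound weighted by $\mu^i$, applying the budget constraint $\lconst'\sum_i \argo^i \leq 1$, and exploiting the tightness $\lconst'\sum_i \respi = 1$ at the observed signals (inherent to coordination-consistent data by monotonicity) collapses the upper bound to $\sum_i \mu^i u_t^i = \sum_i \mu^i \util^i(\respi)$, confirming that the observed signals maximize the weighted aggregate utility.

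The principal obstacle is guaranteeing the existence of a single, time-independent set of weights $\{\mu^i\}$ that equalizes $\mu^i \lambda_t^i$ across $i$ simultaneously for every $t$. This multiplicative separability of the Afriat numbers in the $t$- and $i$-indices is not apparent from \eqref{af_ineq} alone, but is the structural consequence of the common-budget form of the joint constraint in Theorem~\ref{thm:cherchye1}; I would appeal to the Afriat-number construction used in that theorem's proof in \cite{snow2022identifying} to deliver such weights.
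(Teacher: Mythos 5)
Your per-agent analysis is the classical Afriat argument and is correct as far as it goes: swapping $s\leftrightarrow t$ in \eqref{af_ineq} gives $\util^i(\beta_t^i)=u_t^i$ with the minimum attained at the $t$-th piece, the $t$-th affine piece gives the supporting bound $\util^i(\beta^i)\le u_t^i+\lambda_t^i\alpha_t'(\beta^i-\beta_t^i)$, and concavity, continuity and monotonicity are immediate from the min-of-affine construction. There is nothing in the paper to compare this against line by line: the paper's entire proof is the citation ``See Lemma 1 of \cite{snow2022identifying}'', so your attempt supplies more detail than the paper records.

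However, the step you flag as the ``principal obstacle'' is a genuine gap, and it cannot be closed from the corollary's hypotheses as literally stated. Your aggregation needs weights $\mu^i>0$ with $\mu^i\lambda_t^i=\nu_t$ for all $i$ and \emph{every} $t$ simultaneously, i.e.\ the matrix $(\lambda_t^i)$ must have rank one; but \eqref{af_ineq} decouples across agents, so an \emph{arbitrary} feasible solution need not have this product structure, and when it fails the constructed utilities genuinely do not satisfy \eqref{thm1:rat}. Concretely, take $\dimn=1$, $\na=T=2$, $\alpha_1=\alpha_2=1$, $(\beta_1^1,\beta_1^2)=(1/2,1/2)$, $(\beta_2^1,\beta_2^2)=(1/4,3/4)$: the choice $\lambda_1^1=1,\lambda_2^1=2,\lambda_1^2=2,\lambda_2^2=1$ with $u_1^1=0,u_2^1=-0.3,u_1^2=0,u_2^2=0.3$ is feasible for \eqref{af_ineq}, yet the resulting $\util^1,\util^2$ are differentiable at the observed signals with marginal utilities forcing $\mu^1=2\mu^2$ at $t=1$ and $2\mu^1=\mu^2$ at $t=2$, so no positive weights rationalize both periods. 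Hence the corollary holds only for a suitably \emph{selected} feasible solution of \eqref{af_ineq} (one carrying the rank-one $\lambda$ structure delivered by the constructive proof of Theorem~\ref{thm:cherchye1} in \cite{snow2022identifying}); your appeal to that construction is therefore not an optional convenience but the essential content of the proof, and as written your argument does not establish the statement for the general feasible constants the corollary quantifies over. A secondary, smaller gap: you use the tightness $\alpha_t'\sum_i\beta_t^i=1$, which is a property of the dataset $\dataset$ rather than a consequence of \eqref{af_ineq}, and it should be invoked as an explicit hypothesis.
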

 \vspace{-0.2cm}
\begin{proof}
See Lemma 1 of \cite{snow2022identifying}.
\end{proof}

\textit{Corollary~\ref{cor:Utrec} is the key tool we will expand upon in this paper}. \cite{snow2022identifying}, \cite{snow2023statistical}, and \cite{krishnamurthy2020identifying} have investigated the usage of Corollary~\ref{cor:Utrec} for reconstruction of utility functions which rationalize observed sensing signals. However, Corollary~\ref{cor:Utrec} is fundamentally limited to the deterministic regime, i.e., it does not offer guarantees on the rationalizability of a \textit{noisy} dataset. 
Next we introduce an augmentation of \eqref{eq:Utrec} for reconstructing utility functions given noisy signals, quantify the reconstruction accuracy in this case, and extend this to a \textit{distributionally robust} utility estimation procedure.

\end{section}
\vspace{-0.3cm}
\begin{section}{\small Main Result I. Robust Utility Estimation}
\label{sec:robust}
\vspace{-0.3cm}
Here we extend the utility reconstruction technique \eqref{eq:Utrec} to the noisy data regime, and provide a distributionally robust methodology for reconstructing utility functions. 
\vspace{-0.8cm}
    \subsection{\small Quantifying the Proximity to Optimality}
    \vspace{-0.2cm}
    Suppose we obtain a dataset of probes $\alpha_t$ and noisy signals $\hat{\beta}_t^i = \beta_t^i + \epsilon_t^i$, where $\epsilon_t^i$ is additive noise. Denote this noisy dataset as 
    \vspace{-0.3cm}
    \begin{equation}
    \label{eq:ndataset}
        \ndataset := \{\alpha_t,\hat{\beta}_t^i, t\in[T]\}_{i\in[\na]}
        \vspace{-0.3cm}
    \end{equation}
    We construct the following function $\phi$ acting on $\ndataset$: 
    \vspace{-0.2cm}
    \begin{align}
    \begin{split}
    \label{eq:phisolve}
        \phi(\ndataset) = &\arg\min_{r}: \exists \{u_t^i \in \reals,\lambda_t^i > 0, t\in[T]\}_{i\in[\na]}: \\&u_s^i - u_t^i - \lambda_t^i \alpha_t'[\hat{\beta}_s^i - \hat{\beta}_t^i] \leq \lambda_t^i \, r \quad \forall t,s,i
        \vspace{-0.2cm}
    \end{split}
    \end{align}
    If $\phi(\ndataset) \leq 0$ then, by Theorem~\ref{thm:cherchye1}, the dataset $\ndataset$ is consistent with coordination, and utility functions rationalizing $\ndataset$ can be constructed as \eqref{eq:Utrec}. However, given the noise in $\ndataset$ it is likely that $\phi(\ndataset) > 0$, meaning there do not exist utility functions rationalizing $\ndataset$; but in this case $\phi(\ndataset)$ represents the \textit{proximity} to consistency with \eqref{thm1:rat}, or "optimality". \cite{snow2024adaptive} provides more motivation for the construction \eqref{eq:phisolve}. 
    
    In the case when $\phi(\ndataset) > 0$ and Corollary~\ref{cor:Utrec} no longer applies, how can we reconstruct utility functions which are good \textit{approximations} of those rationalizing $\dataset$? We first outline a naive approach, then propose our robust solution.
\vspace{-0.4cm}
\subsection{\small Utility Reconstruction: Naive Approach}
Suppose the \textit{true} dataset $\dataset = \{\alpha_t,\beta_t^i\, t\in[T]\}_{i\in[\na]}$ satisfies \eqref{thm1:rat}. Then, utility functions rationalizing $\dataset$ can be constructed by \eqref{eq:Utrec} using parameters 
\[\psi := [u_1^1,\lambda_1^1,\dots,u_T^{\na},\lambda_{T}^{\na}]' \in \Psi \subseteq \reals^{2TM}\] taken from \eqref{af_ineq}, where $\Psi$ denotes the space of these vectors. 

 When handling the \textit{noisy} dataset $\ndataset$, our goal is to reconstruct utility functions $\{\hat{\util}^i(\cdot)\}_{i\in[\na]}$ closely approximating these $\{\util^i(\cdot)\}_{i\in[\na]}$.
Let $\hat{\psi}$ denote the vector corresponding to the parameters $\{\hat{u}_t^i, \hat{\lambda}_t^i, t\in[T]\}_{i\in[\na]}$ such that
    \begin{equation}
        \label{eq:parhat}
        \hat{u}_s^i - \hat{u}_t^i - \hat{\lambda}_t^i\alpha_t'[\hat{\beta}_s^i - \hat{\beta}_t^i] \leq \hat{\lambda}_t^i\, \phi(\ndataset)
    \end{equation}  
    Since $\phi(\ndataset)$ represents the closest "distance" to optimality, by \eqref{eq:phisolve}, we have that the utility functions 
    \begin{equation}
    \label{eq:noisyut}
        \hat{\util}^i(\cdot) := \min_{t\in[T]}[\hu_t^i + \hlam_t^i\alpha_t'[\cdot - \hb_t^i]]
    \end{equation}
    are the best estimates for $\{\util^i\}_{i=1}^{\na}$.\footnote{This notion of estimation accuracy can be made precise by considering the Hausdorff distance between Pareto-optimal surfaces generated by $\{\hat{\util}^i\}_{i\in[\na]}$ and $\{\util^i\}_{i\in[\na]}$. This is explained in Sec.~\ref{sec:numeric}.}  
    
    However, the stochastic perturbations in $\ndataset$ may result in reconstructed utility functions \eqref{eq:noisyut} which approximate the true utility functions very poorly in some cases, even if on average this approximation is acceptable. In particular we have no control over the \textit{worst-case} approximation, which is necessary to control in many applications \cite{gabrel2014recent}, \cite{lin2022distributionally}; this can be addressed using \textit{robust} approaches. 
    \vspace{-0.3cm}
    \subsection{\small Utility Reconstruction: Robust Approach}
    To hedge against such uncertainty arising from the choice of $\hat{\psi}$ from \eqref{eq:parhat}, we can introduce a \textit{distributionally robust utility estimation procedure}. 

    
    Let $\Phi = \{\beta_t^i, t\in[T]\}_{i\in[\na]}$ denote the dataset of signals, and $\Gamma = \otimes_{t=1}^{T}\otimes_{i=1}^{\na}\Gamma_t^i$ the domain of $\Phi$, where $\beta_t^i \in \Gamma_t^i \subseteq \reals_+^{\dimn}$. Then, a particular (noisy) instantiation $\{\hb_t^i, t\in[T]\}_{i\in[\na]}$ corresponds to the empirical distribution $P_T(\cdot) := \otimes_{t=1}^T \otimes_{i=1}^{\na} \delta(\cdot - \hb_t
    ^i)$ on $\Gamma$, where $\delta$ denotes the standard Dirac delta function on $\reals^{\dimn}$. 
    
    Let $B_{\epsilon}(P_T)$ be the set of probability distributions on $\Gamma$ with 1-Wasserstein distance at most $\epsilon$ from $P_T$.\footnote{The 1-Wasserstein distance between distributions $Q$ and $P$ on space $\CX$ is given by
    \[\wass(Q,P) = \inf_{\pi\in\Pi(Q,P)}\int_{\CX \times \CX} \| x - y\|_2 \pi(dx,dy),\]
    where $\Pi(Q,P)$ is the set of probability distributions on $\CX \times \CX$ with marginals $Q$ and $P$.} 
    
    \textit{Then, we can conceptualize the robust estimation objective as the minimax problem} 
    \begin{align}
    \begin{split}
    \label{eq:robest}
        &\min_{\psi \in \Psi}\sup_{Q \sim B_{\epsilon}(P_T)}\CE_{\Phi \sim Q}\left[ h(\psi,\Phi)\right] \\
        & h(\psi,\Phi) := \arg\min_{r}: u_s^i - u_t^i - \lambda_t^i\alpha_t'[\beta_s^i - \beta_t^i] \leq \lambda_t^i \,r \\
        &\psi = [u_1^1,\lambda_1^1,\dots,u_T^{\na},\lambda_{T}^{\na}]', \,\,\, \Phi = \{\beta_t^i,t\in[T]\}_{i\in[\na]}
    \end{split}
    \end{align}
    The objective \eqref{eq:robest} finds the set of parameters $\psi$ which minimizes the worst-case expected proximity to feasibility over possible datasets $\dataset$ with $\epsilon$ 1-Wasserstein proximity to the noisy dataset $\ndataset$. Thus, when compared to the naive estimation procedure \eqref{eq:noisyut}, \eqref{eq:robest} will better approximate the true utility functions \textit{in the worst case}, making \eqref{eq:robest} a \textit{robust} estimation procedure. 

    It remains to be shown how \eqref{eq:robest} can be computed in practice. This is the focus of the following section.
\end{section}
\begin{section}{\small Main Result II. IRL Algorithm for Robust Utility Estimation}
\label{sec:compute}
 \vspace{-0.2cm}
    Here we show the  equivalence between the distributionally robust utility estimation procedure \eqref{eq:robest} and a semi-infinite program.  We exploit this equivalence to provide a practical algorithm for computing a set of robust utility estimates.
    A semi-infinite program is an optimization problem with a finite number of variables to be optimized but an arbitrary number (continuum) of constraints.
 \vspace{-0.4cm}
    \subsection{\small  Semi-Infinite Programming Reformulation}
     \vspace{-0.2cm}
    We introduce the following assumptions and notation: 
 \vspace{-0.2cm}
    \begin{assumption}[Finite Support Noise]
    \label{as:finsup}
        The support of each additive noise $\epsilon_t^i$ distribution is contained within a ball of radius $R$. \footnote{This is satisfied in practice since any physical sensor which measures $\beta_t^i$ will have upper and lower bounds on the measured signal power.}
    \end{assumption}
 \vspace{-0.5cm}
    \begin{assumption}[Probe Magnitude Bound]
    \label{as:albd}
        $\alpha_t$ is lower bounded in magnitude: $\exists \,\bar{\alpha}: \, \|\alpha_t\| \geq \bar{\alpha} > 0 \, \forall t\in[T]$. 
    \end{assumption}
 \vspace{-0.5cm}
    \begin{assumption}[Parameter Set Bounds]
    \label{as:convex}
        There exists $\hat{\lambda}>0$ such that $\Psi$ is restricted to the set $\{[u_1^1,\lambda_1^1,\dots,u_T^{\na},\lambda_{T}^{\na}]\}$ with $u_s^i \in [-1,1],\, \lambda_s^i \in [\hat{\lambda},1], \,\forall s \in[T],i\in[\na]$.  \footnote{This is without loss of generality. Observe: if a set of parameters $\hat{\psi} = [\hu_1^1,\dots,\hlam_T^{\na}]\in \Psi$ solves \eqref{eq:parhat}, then so does $c\,\hat{\psi} := [c\hu_1^1,\dots, c\hlam_T^{\na}]$ for any scalar $c>0$. Also, given the boundedness of $\|\alpha_t\|$ and $\|\beta_t^i\|$ the ratio $\hat{u}_s^i/\hat{\lambda}_t^i$ will be bounded from above and below by positive real numbers. Thus, we can always find some $\hat{\psi}$ solving \eqref{eq:parhat} such that $\hat{u}_s^i \in [-1,1], \, \hat{\lambda}_s^i \in [\hat{\lambda},1], \forall s\in[T], i\in[\na]$, with $\hat{\lambda}>0$.}
    \end{assumption}

   \vspace{-0.2cm}
    By Assumptions~\ref{as:albd}, \ref{as:convex}, and the constraint in \eqref{thm1:rat}, we must have that $h(\psi,\Phi) \leq V := 2(1+R)+2$ for any $\psi \in \Psi, \, \Phi \in \Gamma$, with $\psi$ satisfying A~\ref{as:convex}. 
    Let us denote $\CV := \biggl\{\bv \in \reals^{2}: \,\,0 \leq v_1 \leq 2V,\, \,0\leq v_2\leq V/\epsilon\biggr\}$.
    Now, we have the following equivalence result. 
     \vspace{-0.2cm}
    \begin{theorem}[Semi-Infinite Reformulation]
        Under Assumptions~\ref{as:finsup} - \ref{as:convex}, \eqref{eq:robest} is equivalent to the following semi-infinite program:
        \begin{gather}\begin{aligned}
        \begin{split}
        \label{eq:siprog}
           & \min_{\psi\in\Psi,\bv\in\CV} \, \epsilon \cdot v_2 + v_1 \,\,\,
             s.t. \, \sup_{\Phi \in \Gamma} G(\psi,\bv, \Phi, \hat{\dataset}) \leq 0 
             \\& G(\psi,\bv, \Phi, \hat{\dataset}) := h(\psi,\Phi) - v_2\sum_{i=1}^{\na}\sum_{t=1}^{T}\|\beta_t^i - \hb_t^i \|_2 - v_1
             \vspace{-0.5cm}
        \end{split}
        \end{aligned}\raisetag{3.6\baselineskip}\end{gather}
    \end{theorem}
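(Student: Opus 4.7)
The plan is to invoke strong Wasserstein duality to reduce the inner supremum over distributions to a deterministic penalized supremum over $\Gamma$, then introduce an epigraphic slack to convert the resulting minmax into the semi-infinite form of \eqref{eq:siprog}. A concluding argument justifies the restriction $\bv \in \CV$.

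First I would derive a closed-form expression for $h$. Since the defining inequalities are $u_s^i - u_t^i - \lambda_t^i\alpha_t'(\beta_s^i - \beta_t^i) \leq \lambda_t^i r$ with $\lambda_t^i > 0$, the smallest feasible $r$ is
\[h(\psi,\Phi) = \max_{s,t,i}\Bigl\{(u_s^i - u_t^i)/\lambda_t^i - \alpha_t'(\beta_s^i - \beta_t^i)\Bigr\},\]
a finite pointwise maximum of affine functions in $\Phi$. Hence $h(\psi,\cdot)$ is convex and continuous (in particular upper semicontinuous) on the closed set $\Gamma$, and nonnegative (take $s=t$). These regularity and integrability properties are precisely what is needed for strong Wasserstein duality.

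Next, observe that $P_T = \otimes_{t,i}\delta_{\hb_t^i}$ is a single Dirac on the product space $\Gamma$ at $\hat\Phi := \{\hb_t^i\}$. Invoking the standard Esfahani--Kuhn / Gao--Kleywegt duality for $B_\epsilon(P_T)$, I would obtain
\[\sup_{Q\in B_\epsilon(P_T)}\CE_{\Phi\sim Q}[h(\psi,\Phi)] = \inf_{\lambda\geq 0}\Bigl\{\lambda\epsilon + \sup_{\Phi\in\Gamma}\bigl[h(\psi,\Phi) - \lambda\, d(\Phi,\hat\Phi)\bigr]\Bigr\},\]
with ground metric $d(\Phi,\hat\Phi) = \sum_{t=1}^T\sum_{i=1}^{\na}\|\beta_t^i - \hb_t^i\|_2$ inherited from the $1$-Wasserstein distance on the product space. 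Renaming $v_2 := \lambda$ and introducing an epigraphic slack $v_1 \geq \sup_\Phi\{h(\psi,\Phi) - v_2 d(\Phi,\hat\Phi)\}$, the outer minimization in \eqref{eq:robest} becomes
\[\min_{\psi\in\Psi,\,v_1,v_2\geq 0}\epsilon v_2 + v_1 \quad\text{s.t.}\quad \sup_{\Phi\in\Gamma}G(\psi,\bv,\Phi,\hat{\dataset})\leq 0,\]
matching \eqref{eq:siprog} apart from the box bound $\bv\in\CV$.

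The main obstacle is justifying that restricting to $\CV$ is without loss of generality. I would first use Assumption~\ref{as:finsup} together with the primal constraint $\alpha_t'\beta_t^i \leq 1$ from \eqref{thm1:rat} and Assumption~\ref{as:albd} to uniformly bound $\|\beta_t^i\|_2$ in terms of $R$ and $\bar\alpha$; combining with Assumption~\ref{as:convex} on $(u_s^i,\lambda_s^i)$ and the closed form of $h$ yields the uniform bound $h(\psi,\Phi) \leq V := 2(1+R)+2$ for all $\psi \in \Psi$, $\Phi \in \Gamma$. Consequently the pair $(v_1,v_2) = (V,0)$ is always feasible with objective $V$, so any optimizer satisfies $v_1 \leq 2V$ and $v_2 \leq V/\epsilon$. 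Since $G$ is monotone decreasing in both $v_1$ and $v_2$, clipping the feasible set to $\CV$ cannot remove feasible points nor cut off any optimum, which completes the reformulation.
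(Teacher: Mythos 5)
Your proposal is correct, but it takes a genuinely different route from the paper. The paper's own proof is a three-line verification of hypotheses --- compactness of $\Gamma$ and $\Psi$, the uniform bound $h \leq V$, and uniform Lipschitz continuity of $h$ in $(\psi,\Phi)$ --- followed by a citation of Corollary~3.8 of \cite{luo2017decomposition}, which packages the entire reformulation. You instead rederive the result from first principles: the closed form $h(\psi,\Phi) = \max_{s,t,i}\{(u_s^i-u_t^i)/\lambda_t^i - \alpha_t'(\beta_s^i-\beta_t^i)\}$ establishes the convexity/continuity needed for strong Wasserstein duality, the Esfahani--Kuhn/Gao--Kleywegt dual collapses nicely because $P_T$ is a single Dirac at $\hat\Phi$, and the epigraphic slack $v_1$ produces exactly the constraint $\sup_\Phi G \leq 0$. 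Your derivation buys transparency the citation does not: it makes explicit that $v_2$ is the dual multiplier of the Wasserstein-radius constraint and $v_1$ the epigraph variable, and your feasibility argument with $(v_1,v_2)=(V,0)$ actually justifies the box $\CV$, which the paper asserts without comment. The cost is that you must verify the regularity conditions for strong duality yourself, whereas the cited corollary absorbs them into the Lipschitz/compactness hypotheses. Two minor points: the ground metric $\sum_{t,i}\|\beta_t^i-\hb_t^i\|_2$ you adopt matches the form of $G$ in the statement but differs from the single $\|x-y\|_2$ in the paper's footnote defining $\wass$ (an inconsistency in the paper, not in your argument, but worth flagging when you invoke duality); and clipping to $\CV$ does remove feasible points outside the box --- what matters, as your monotonicity observation shows, is that it removes no optimal ones.
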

 \vspace{-0.4cm}
    \begin{proof}
        Under Assumptions~\ref{as:finsup}-\ref{as:convex}, $\Gamma$ and $\Psi$ are compact. We have observed that $h(\psi,\Phi) \leq V$. Now observe by inspection that $h(\psi,\Phi)$ is uniformly Lipschitz continuous in $\psi$ and $\Phi$. Thus we can apply Corollary 3.8 of \cite{luo2017decomposition}.
    \end{proof}
\vspace{-0.75cm}
\subsection{\small  Finite Reduction and Algorithmic Solution}
\vspace{-0.2cm}
    The semi-infinite program \eqref{eq:siprog} can be solved via exchange methods \cite{hettich1993semi}, \cite{dong2021wasserstein}, \cite{joachims2009cutting}. 
    We first approximate it by a finite optimization, then iteratively solve this while appending constraints. Let $\tilde{\Gamma} = \{\Phi_1,\dots,\Phi_J\}$ be a collection of $J$ elements in $\Gamma$, i.e., each $\Phi_j, \, j\in[J],$ is a dataset $\{\beta_{t,j}^i, t\in[T]\}_{i\in[\na]}$.  Consider
     the following finite program:
    \vspace{-0.3cm}
    \begin{align}
    \begin{split}
    \label{eq:finred}
        &\min_{\psi\in\Psi,\bv\in\CV} \, \epsilon \cdot v_2 + v_1 \\
        s.t. \, &\max_{\Phi_j \in \tilde{\Gamma}}  G(\psi,\bv, \Phi_j, \hat{\dataset}) \leq 0
         \vspace{-0.4cm}
    \end{split}
    \end{align}
    We can iteratively refine the constraints in the finite program \eqref{eq:finred} by introducing the following maximum constraint violation problem:
    \vspace{-0.2cm}
    \begin{align}
        \begin{split}
        \label{eq:mcv}
            CV = \max_{\Phi \in \Gamma} G(\hat{\psi},\hat{\bv},\Phi,\ndataset) 
        \end{split}
    \end{align}
    where $\hat{\bv} := \{\hat{v}_1, \,\hat{v}_2\}, \hat{\psi} := \{\hu_t^i,\hlam_t^i, t\in[T]\}_{i\in[\na]}$ are optimal solutions to \eqref{eq:finred} under $\tilde{\Gamma}$. Supposing $CV > 0$, we let $\hat{\Phi} \in \Gamma$ be the argument attaining this maximum and append it to $\tilde{\Gamma}$ in \eqref{eq:finred}. Then we iterate, tightening the approximation for the infinite set of constraints in \eqref{eq:siprog} until $CV \leq \delta$; by \cite{dong2021wasserstein} this termination yields a $\delta$-optimal solution of \eqref{eq:siprog}. 
    \setlength{\textfloatsep}{-0.2cm}
    \begin{algorithm}[t]
    \caption{Wasserstein Robust Utility Estimation}
    \label{alg:dro}
    \begin{algorithmic}[1]
        \State Input: Noisy dataset $\ndataset = \{\alpha_t,\,\hb_t^i, t\in[T]\}_{i\in[\na]}$, Wasserstein radius $\epsilon$, stopping tolerance $\delta$. 
        \State Initialize: $\hat{\psi} \in \Psi, \hat{v} \in \CV, \tilde{\Gamma} \leftarrow \emptyset, CV = \delta+1$.
        \While{$CV \geq \delta$}
            \State Solve \eqref{eq:mcv} with $\hat{\psi},\hat{\bv}$, returning $\hat{\Phi}$, $CV$.
            \IIf {$CV$ > 0} $\tilde{\Gamma} \leftarrow \tilde{\Gamma} \cup \hat{\Phi}$ \EndIIf
            \State Solve \eqref{eq:finred} with $\tilde{\Gamma}$, returning $\hat{\psi}, \hat{\bv}$.
        \EndWhile
        \State Output: $\delta$-optimal solution $\hat{\psi}$ of \eqref{eq:siprog}; thus, of \eqref{eq:robest}.
    \end{algorithmic}
    \end{algorithm}
    \setlength{\floatsep}{-0.2cm}

    Algorithm~\ref{alg:dro} illustrates this iterative procedure, and by \cite{dong2021wasserstein} it converges with rate
    $\mathcal{O}\left( \left(\frac{1}{\delta} + 1\right)^{2T\na + 2}\right)$.

\end{section}
\vspace{-0.3cm}
\subsection{\small  Numerical Example}
\label{sec:numeric}
 \vspace{-0.2cm}
The following example is  motivated by the interaction between a cognitive radar network and a target, and can be derived from spectral optimization in this interaction. For brevity we do not expand on this, see \cite{snow2023statistical} for details. We generate the noisy dataset $\ndataset$ \eqref{eq:ndataset} for $\na=3$ agents:
    \begin{align}
    \begin{split}
    &\alpha_t \sim \mathcal{U}(0.1,1.1)^2 \in \reals^2,\, \beta_t^i \in \reals^2, \, t\in \{1,\dots,5\}, \\& \{\beta_t^i\}_{i=1}^3 \in \arg\max_{\{\beta^i\}_{i=1}^{3}}\sum_{i=1}^{3}f^i(\beta^i)\,\, s.t.\, \,\alpha_t'(\sum_{i=1}^{3}\beta^i) \leq 1 \\
    & \hat{\beta}_t^i = \max\{\beta_t^i + \epsilon_t^i\,,0.01(\boldsymbol{1})\},\,\ \epsilon_t^{i} \sim \mathcal{N}(0,1)^2
    \end{split}
    \end{align}
    where  $\boldsymbol{1} =[1, 1]^\prime$, $\max$ operates elementwise, and the utilities of the 3 agents are $f^1(\beta) = \beta(1) + \beta(2), \,f^2(\beta) = \beta(1) + \beta(2)^{1/4},\, f^3(\beta) = \beta(1)^{1/4} + \beta(2)$. We initialize the variables in Algorithm~\ref{alg:dro} as $\delta = 0.1, \, \epsilon = 0.2$. 

  We test the reconstruction accuracy of \eqref{eq:noisyut}, with parameters $\hat{\psi}$ taken from \eqref{eq:parhat} (naive approach) and Algorithm~\ref{alg:dro} (robust approach).  We quantify the reconstruction accuracy in terms of the Hausdorff distance between Pareto-optimal surfaces generated by the reconstructed and true utility functions.\footnote{The reconstruction accuracy of $\{\hat{\util}^i(\cdot)\}_{i=1}^{\na}$ can be quantified as the Hausdorff distance between Pareto-optimal surfaces $E_{f,\alpha}, E_{\hat{\util},\alpha}$, where we define $E_{g,\alpha} = \{x\in\reals^n: x\in \arg\max_{\gamma}\sum_{i=1}^{\na}g^i(\gamma) \, s.t. \, \alpha'\gamma \leq 1\}$.
     This Hausdorff distance is given as $H(E_{f,\alpha},E_{\hat{\util},\alpha})$, given by $H(E_{f,\alpha},E_{\hat{\util},\alpha})  := \max\biggl\{\sup_{x\in E_{f,\alpha}}d(x,E_{\hat{\util},\alpha}), \sup_{y\in E_{\hat{\util},\alpha}}d(y,E_{f,\alpha})\biggr\}$,
    where the distance from point $a$ to set $B$ is $d(a,B) = \inf_{b\in B}d(a,b)$.}
\vspace{-0.2cm}
    \begin{table}[!h]
            \small
            \renewcommand{\arraystretch}{1} 
            \begin{center}
                \begin{tabular}{|p{1cm} | p{2cm} | p{2.5cm}|} 
                    \hline
                     & Average Error & Worst-Case Error \\ \hline
                    Naive  & 0.0627 & 0.9012 \\
                    Robust & 0.0687 & 0.4624 \\
                    \hline
                \end{tabular}
                \vspace{-0.2cm}
               \caption{\small Average and worst-case errors for the naive and robust utility reconstruction procedures, both averaged over 100 Monte-Carlo simulations.}
                \label{tab:1}
            \end{center}
\end{table}
\vspace{-0.5cm}

Table~\ref{tab:1} displays the average error and worst-case error, averaged over 100 Monte-Carlo simulations.  
     
Observe that while Algorithm~\ref{alg:dro} performs similarly to the naive reconstruction on average, its performance is significantly improved in the worst-case. Thus, we verify that Algorithm~\ref{alg:dro} achieves \textit{distributionally robust} utility estimation, \textit{without sacrificing average performance}. The distributional robustness is apparent from the reduced worst-case error,. 

Despite the apparent complexity of the semi-infinite optimization \eqref{eq:siprog}, Figure~\ref{fig:Algcon} shows that a $\delta$-optimal solution from Algorithm~\ref{alg:dro} can be achieved rapidly. Each curve is the average of 100 Monte-Carlo simulations, for different Wassertstein radii $\epsilon$. In each case Algorithm~\ref{alg:dro} produces a $\delta$-optimal solution on average within 10 iterations for $\delta = 0.1$.
\vspace{-0.4cm}
 \begin{figure}[h!]
  \begin{subfigure}{\textwidth}
      \includegraphics[width=0.5\linewidth]{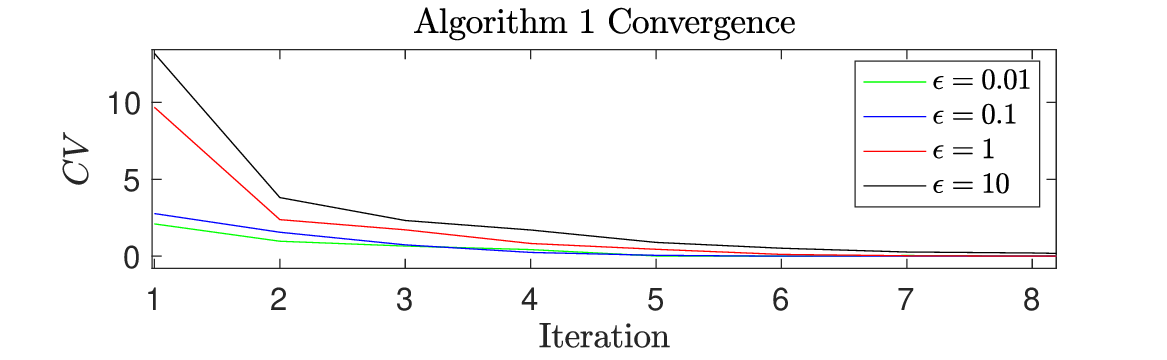}
    \end{subfigure}
    
\vspace{-0.2cm}
    \caption{\small Average convergence of Algorithm~\ref{alg:dro} for varying Wasserstein radii $\epsilon$, over 100 Monte-Carlo simulations.}
    \label{fig:Algcon}
\end{figure}
\vspace{-0.3cm}

\vspace{-0.7cm}
\begin{section}{\small Conclusions}
\vspace{-0.4cm}
We have provided an algorithmic framework for distributionally robust IRL (utility estimation) for coordinated sensing systems. We derived a Wasserstein robust objective using microeconomic revealed preferences, proved its equivalence to a semi-infinite program reformulation, and provided a practical algorithm for obtaining solutions of this reformulation. We illustrated the efficacy of this approach via numerial simulations.
\end{section}

\bibliographystyle{IEEEtran}
\bibliography{Bibliography.bib}

\begin{thebibliography}{10}
\providecommand{\url}[1]{#1}
\csname url@samestyle\endcsname
\providecommand{\newblock}{\relax}
\providecommand{\bibinfo}[2]{#2}
\providecommand{\BIBentrySTDinterwordspacing}{\spaceskip=0pt\relax}
\providecommand{\BIBentryALTinterwordstretchfactor}{4}
\providecommand{\BIBentryALTinterwordspacing}{\spaceskip=\fontdimen2\font plus
\BIBentryALTinterwordstretchfactor\fontdimen3\font minus
  \fontdimen4\font\relax}
\providecommand{\BIBforeignlanguage}[2]{{%
\expandafter\ifx\csname l@#1\endcsname\relax
\typeout{** WARNING: IEEEtran.bst: No hyphenation pattern has been}%
\typeout{** loaded for the language `#1'. Using the pattern for}%
\typeout{** the default language instead.}%
\else
\language=\csname l@#1\endcsname
\fi
#2}}
\providecommand{\BIBdecl}{\relax}
\BIBdecl

\bibitem{natarajan2010multi}
S.~Natarajan, G.~Kunapuli, K.~Judah, P.~Tadepalli, K.~Kersting, and J.~Shavlik,
  ``Multi-agent inverse reinforcement learning,'' in \emph{2010 ninth
  international conference on machine learning and applications}.\hskip 1em
  plus 0.5em minus 0.4em\relax IEEE, 2010, pp. 395--400.

\bibitem{yu2019multi}
L.~Yu, J.~Song, and S.~Ermon, ``Multi-agent adversarial inverse reinforcement
  learning,'' in \emph{International Conference on Machine Learning}.\hskip 1em
  plus 0.5em minus 0.4em\relax PMLR, 2019, pp. 7194--7201.

\bibitem{cherchye2011revealed}
L.~Cherchye, B.~De~Rock, and F.~Vermeulen, ``The revealed preference approach
  to collective consumption behaviour: Testing and sharing rule recovery,''
  \emph{The Review of Economic Studies}, vol.~78, no.~1, pp. 176--198, 2011.

\bibitem{nobibon2016revealed}
F.~T. Nobibon, L.~Cherchye, Y.~Crama, T.~Demuynck, B.~De~Rock, and F.~C.
  Spieksma, ``Revealed preference tests of collectively rational consumption
  behavior: formulations and algorithms,'' \emph{Operations Research}, vol.~64,
  no.~6, pp. 1197--1216, 2016.

\bibitem{krishnamurthy2020identifying}
V.~Krishnamurthy, D.~Angley, R.~Evans, and B.~Moran, ``Identifying cognitive
  radars-inverse reinforcement learning using revealed preferences,''
  \emph{IEEE Transactions on Signal Processing}, vol.~68, pp. 4529--4542, 2020.

\bibitem{pattanayak2022meta}
K.~Pattanayak, V.~Krishnamurthy, and C.~Berry, ``Meta-cognition. an
  inverse-inverse reinforcement learning approach for cognitive radars,'' in
  \emph{2022 25th International Conference on Information Fusion
  (FUSION)}.\hskip 1em plus 0.5em minus 0.4em\relax IEEE, 2022, pp. 01--08.

\bibitem{snow2023statistical}
L.~Snow and V.~Krishnamurthy, ``Statistical detection of coordination in a
  cognitive radar network through inverse multi-objective optimization,''
  \emph{IEEE International Conference on Information Fusion}, 2023.

\bibitem{krishnamurthy2021adversarial}
V.~Krishnamurthy, K.~Pattanayak, S.~Gogineni, B.~Kang, and M.~Rangaswamy,
  ``Adversarial radar inference: Inverse tracking, identifying cognition, and
  designing smart interference,'' \emph{IEEE Transactions on Aerospace and
  Electronic Systems}, vol.~57, no.~4, pp. 2067--2081, 2021.

\bibitem{dong2021wasserstein}
C.~Dong and B.~Zeng, ``Wasserstein distributionally robust inverse
  multiobjective optimization,'' in \emph{Proceedings of the AAAI Conference on
  Artificial Intelligence}, vol.~35, no.~7, 2021, pp. 5914--5921.

\bibitem{luo2017decomposition}
F.~Luo and S.~Mehrotra, ``Decomposition algorithm for distributionally robust
  optimization using wasserstein metric,'' \emph{arXiv preprint
  arXiv:1704.03920}, 2017.

\bibitem{bertsimas2019adaptive}
D.~Bertsimas, M.~Sim, and M.~Zhang, ``Adaptive distributionally robust
  optimization,'' \emph{Management Science}, vol.~65, no.~2, pp. 604--618,
  2019.

\bibitem{mcfadden2006revealed}
D.~L. McFadden, ``Revealed stochastic preference: a synthesis,'' in
  \emph{Rationality and Equilibrium: A Symposium in Honor of Marcel K.
  Richter}.\hskip 1em plus 0.5em minus 0.4em\relax Springer, 2006, pp. 1--20.

\bibitem{bandyopadhyay1999stochastic}
T.~Bandyopadhyay, I.~Dasgupta, and P.~K. Pattanaik, ``Stochastic revealed
  preference and the theory of demand,'' \emph{Journal of Economic Theory},
  vol.~84, no.~1, pp. 95--110, 1999.

\bibitem{aguiar2021stochastic}
V.~H. Aguiar and N.~Kashaev, ``Stochastic revealed preferences with measurement
  error,'' \emph{The Review of Economic Studies}, vol.~88, no.~4, pp.
  2042--2093, 2021.

\bibitem{snow2022identifying}
L.~Snow, V.~Krishnamurthy, and B.~M. Sadler, ``Identifying coordination in a
  cognitive radar network--a multi-objective inverse reinforcement learning
  approach,'' \emph{International Conference on Acoustics, Speech, and Signal
  Processing}, 2022.

\bibitem{pace2009detecting}
P.~E. Pace, \emph{Detecting and classifying low probability of intercept
  radar}.\hskip 1em plus 0.5em minus 0.4em\relax Artech house, 2009.

\bibitem{snow2024adaptive}
L.~Snow and V.~Krishnamurthy, ``Adaptive mechanism design using multi-agent
  revealed preferences,'' \emph{arXiv preprint arXiv:2404.15391}, 2024.

\bibitem{gabrel2014recent}
V.~Gabrel, C.~Murat, and A.~Thiele, ``Recent advances in robust optimization:
  An overview,'' \emph{European journal of operational research}, vol. 235,
  no.~3, pp. 471--483, 2014.

\bibitem{lin2022distributionally}
F.~Lin, X.~Fang, and Z.~Gao, ``Distributionally robust optimization: A review
  on theory and applications,'' \emph{Numerical Algebra, Control and
  Optimization}, vol.~12, no.~1, pp. 159--212, 2022.

\bibitem{hettich1993semi}
R.~Hettich and K.~O. Kortanek, ``Semi-infinite programming: theory, methods,
  and applications,'' \emph{SIAM review}, vol.~35, no.~3, pp. 380--429, 1993.

\bibitem{joachims2009cutting}
T.~Joachims, T.~Finley, and C.-N.~J. Yu, ``Cutting-plane training of structural
  svms,'' \emph{Machine learning}, vol.~77, pp. 27--59, 2009.

\end{thebibliography}

\end{document}